
\typeout{IJCAI-15 Instructions for Authors}


\documentclass{article}
\usepackage{ijcai16}

\usepackage{times}

\usepackage{amssymb,amsmath,amsthm}
\usepackage{xspace}
\usepackage{amsthm}
\usepackage{amsmath}
\usepackage{amssymb}
\usepackage{mathrsfs}
\usepackage{url}

\usepackage{algorithm,algorithmicx,algpseudocode}

\newcommand{\possent}{\vdash_{poss}}

\newtheorem{theorem}{Theorem}
\newtheorem{definition}{Definition}

\newtheorem{lemma}{Lemma}
\newtheorem{proposition}{Proposition}

\newtheorem{example}{Example}

\interdisplaylinepenalty=10

\usepackage{stmaryrd}

\newcommand{\snake}{{\,\mid\!\sim\,}}

\pdfinfo{
/Title (Learning Possibilistic Logic Theories from Default Rules)
/Author (Ondrej Kuzelka, Jesse Davis, Steven Schockaert) }

\title{Learning Possibilistic Logic Theories from Default Rules}

\author{Ond\v{r}ej Ku\v{z}elka \\ 
Cardiff University, UK  \\
KuzelkaO@cardiff.ac.uk \And 
Jesse Davis \\ 
KU Leuven, Belgium  \\
jesse.davis@cs.kuleuven.be \And
Steven Schockaert \\ 
Cardiff University, UK  \\
SchockaertS1@cardiff.ac.uk}

\newcommand*{\ARXIVVERSION}{}%

\begin{document}

\maketitle

\begin{abstract}
We introduce a setting for learning possibilistic logic theories from defaults of the form ``if alpha then typically beta''. We first analyse this problem from the point of view of machine learning theory, determining the VC dimension of possibilistic stratifications as well as the complexity of the associated learning problems, after which we present a heuristic learning algorithm that can easily scale to thousands of defaults. An important property of our approach is that it is inherently able to handle noisy and conflicting sets of defaults. Among others, this allows us to learn possibilistic logic theories from crowdsourced data and to approximate propositional Markov logic networks using heuristic MAP solvers. We present experimental results that demonstrate the effectiveness of this approach.
\end{abstract}

\section{Introduction}


Structured information plays an increasingly important role in applications such as information extraction \cite{dong2014knowledge}, question answering \cite{kalyanpur2012structured} and robotics \cite{beetz2011robotic}. With the notable exceptions of CYC and WordNet, most of the knowledge bases that are used in such applications have at least partially been obtained using some form of crowdsourcing (e.g.\ Freebase, Wikidata, ConceptNet). To date, such knowledge bases are mostly limited to facts (e.g.\ Obama is the current president of the US) and simple taxonomic relationships (e.g.\ every president is a human). 
One of the main barriers to crowdsourcing more complex domain theories is that most users are not trained in logic. This is exacerbated by the fact that often (commonsense) domain knowledge is easiest to formalize as defaults (e.g.\ birds typically fly), and, even for non-monotonic reasoning (NMR) experts, it can be challenging to formulate sets of default rules without introducing inconsistencies (w.r.t.\ a given NMR semantics) or unintended consequences.

In this paper, we propose a method for learning consistent domain theories from crowdsourced examples of defaults and non-defaults. Since these examples are provided by different users, who may only have an intuitive understanding of the semantics of defaults, together they will typically be inconsistent. The problem we consider is to construct a set of defaults which is consistent w.r.t.\ the System P semantics \cite{KLM}, and which entails as many of the given defaults and as few of the non-defaults as possible. Taking advantage of the relation between System P and possibilistic logic \cite{benferhat1997nonmonotonic}, we treat this as a learning problem, in which we need to select and stratify a set of propositional formulas.
 
The contributions of this paper are as follows. First, we show that the problem of deciding whether a possibilistic logic theory exists that perfectly covers all positive and negative examples is $\Sigma_2^P$-complete. Second, we formally study the problem of learning from defaults in a standard learning theory setting and we determine the corresponding VC-dimension, which allows us to derive theoretical bounds on how much training data we need, on average, to obtain a system that can classify defaults as being valid or invalid with a given accuracy level. Third, we introduce a heuristic  algorithm for learning possibilistic logic theories from defaults and non-defaults. To the best of our knowledge, our method is the first that can learn a consistent logical theory from a set of noisy defaults. We evaluate the performance of this algorithm in two crowdsourcing experiments. In addition, we show how it can be used for approximating maximum a posteriori (MAP) inference in propositional Markov logic networks \cite{Richardson2006,Saint-cyr94penaltylogic}.

\section{Related work}
Reasoning with defaults of the form ``if $\alpha$ then typically $\beta$'', denoted as $\alpha \snake \beta$, has been widely studied \cite{KLM,pearl1990system,lehmann1992does,geffner1992conditional,goldszmidt1993maximum,benferhat1997nonmonotonic}. A central problem in this context is to determine what other defaults can be derived from a given input set. Note, however, that the existing approaches for reasoning about default rules all require some form of consistency (e.g.\ the input set cannot contain both $a\snake b$ and $a\snake \neg b$). As a result, these approaches cannot directly be used for reasoning about noisy crowdsourced defaults.

To the best of our knowledge, this is the first paper that considers a machine learning setting where the input consists of default rules. Several authors have proposed approaches for constructing possibility distributions from data; see \cite{dubois_prade_survey} for a recent survey. However, such methods are generally not practical for constructing possibilistic logic theories. The possibilistic counterpart of the Z-ranking constructs a possibilistic logic theory from a set of defaults, but it requires that these defaults are consistent and cannot handle non-defaults \cite{benferhat1997nonmonotonic}, although an extension of the Z-ranking that can cope with non-defaults was proposed in \cite{booth1998note}. Some authors have also looked at the problem of learning sets of defaults from data \cite{big_stepped,kernDefaultLearning}, but the performance of these methods has not been experimentally tested. In \cite{Serrurier2007939}, a possibilistic inductive logic programming (ILP) system is proposed, which uses a variant of possibilistic logic for learning rules with exceptions. However, as is common for ILP systems, this method only considers classification problems, and cannot readily be applied to learn general possibilistic logic theories. Finally note that the setting of learning from default rules as introduced in this paper can be seen as a non-monotonic counterpart of an ILP setting called {\em learning from entailment} \cite{de1997logical}.

\section{Background}

\subsection{Possibilistic logic}
A stratification of a propositional theory $\mathcal{T}$ is an ordered partition of $\mathcal{T}$. We will use the notation $\textit{Strat}(\mathcal{T})$ to denote the set of all such ordered partitions and $\textit{Strat}^{(k)}(\mathcal{T})$ to denote the set of all ordered partitions into at most $k$ subsets of $\mathcal{T}$. A theory in possibilistic logic \cite{DLP} is a set of formulas of the form $(\alpha,\lambda)$, with $\alpha$ a propositional formula and $\lambda \in ]0,1]$ a certainty weight. These certainty weights are interpreted in a purely ordinal fashion, hence a possibilistic logic theory is essentially a stratification of a propositional theory.
The strict $\lambda$-cut $\Theta_{\overline{\lambda}}$ of a possibilistic logic theory $\Theta$ is defined as 
$\Theta_{\overline{\lambda}} = \{\alpha \,|\, (\alpha,\mu)\in \Theta, \mu > \lambda\}$. The inconsistency level $\textit{inc}(\Theta)$ of $\Theta$ is the lowest certainty level $\lambda$ in $[0,1]$ for which the classical theory $\Theta_{\overline{\lambda}}$ is consistent.
An inconsistency-tolerant inference relation $\possent$ for possibilistic logic can then be defined as follows:
\begin{align*}
\Theta \possent \alpha \quad\text{iff}\quad \Theta_{\overline{\textit{inc}(\Theta)}} \models \alpha
\end{align*}
We will write $(\Theta,\alpha)\possent \beta$ as an abbreviation for $\Theta\cup \{(\alpha,1)\} \possent \beta$. 
It can be shown that $\Theta \possent (\alpha,\lambda)$ can be decided by making $O(\log_2 k)$ calls to a SAT solver, with $k$ the number of certaintly levels in $\Theta$ \cite{La2001.1}. 

There is a close relationship between possibilistic logic and the rational closure of a set of defaults. 
Recall that $\alpha \snake \beta$ is tolerated by a set of defaults $\{\alpha_1\snake \beta_1,...,\alpha_n\snake \beta_n\}$ if the classical formula $\alpha \wedge \beta \wedge \bigwedge_i (\neg \alpha_i \vee \beta_i)$ is consistent \cite{pearl1990system}. Let $\Delta$ be a set of defaults. The rational closure of $\Delta$ is based on a stratification $\Delta_1,...,\Delta_k$, known as the Z-ordering, where each $\Delta_j$ contains all defaults from $\Delta\setminus (\Delta_{1}\cup ... \Delta_{j-1})$ which are tolerated by $\Delta\setminus (\Delta_1\cup ... \cup \Delta_{j-1})$. Intuitively, $\Delta_1$ contains the most general default rules, $\Delta_2$ contains exceptions to these rules, $\Delta_3$ contains exceptions to these exceptions, etc. Given the stratification $\Delta_1,...,\Delta_k$ we define the possibilistic logic theory $\Theta = \{(\neg \alpha \vee \beta, \lambda_i) \,|\, (\alpha\snake\beta)\in \Delta_i\}$, where we assume $0<\lambda_1<...<\lambda_k \leq 1$. It then holds that $\alpha \snake \beta$ is in the rational closure of $\Delta$ iff $(\Theta,\alpha)\possent \beta$ \cite{benferhat1998practical}.

\subsection{Learning Theory}

We now cover some basic notions from statistical learning theory \cite{Vapnik:1995:NSL:211359}. We restrict ourselves to binary classification problems, where the two labels are $1$ and $-1$. Let $\mathcal{X}$ be a set of {\em examples}. A {\em hypothesis} is a function $h : \mathcal{X} \rightarrow \{-1,1\}$. A hypothesis $h$ is said to cover an example $e \in \mathcal{X}$ if $h(e) = 1$. 
%
Consider a set $\mathcal{S}\subseteq \mathcal{X} \times \{-1,1\}$ of $n$ labeled examples that have been iid sampled from a distribution $p$.
A hypothesis $h$'s sample error rate is
$
\textit{err}_{\mathcal{S}}(h,\mathcal{S}) = \frac{1}{n} \sum_{(x,c) \in \mathcal{S}} \mathbf{1}(h(x) \neq c)
$
where $\mathbf{1}(\alpha)=1$ if $\alpha\equiv\textit{true}$ and $\mathbf{1}(\alpha)=0$ otherwise.
A hypothesis $h$'s expected error w.r.t.\ the probability distribution $p$ is given by
$
\textit{err}_p(h) = \mathbf{E}_{(X,C) \sim p} [\mathbf{1}(h(X) \neq C)].
$
Statistical learning theory provides tools for bounding the probability $P(\sup_{h \in \mathcal{H}}|\textit{err}_p(h)-\textit{err}_\mathcal{S}(h,\mathcal{S})| \geq \epsilon)$, where $\mathcal{S}$ is known to be sampled iid from $p$ but $p$ itself is unknown. 
These bounds link $h$'s training set error to its (probable) performance on other examples drawn from the same distribution, and therefore permits theoretically controlling overfitting. The most important bounds of this type depend on the Vapnik-Chervonenkis (VC) dimension \cite{Vapnik:1995:NSL:211359}.

\begin{definition}[Vapnik-Chervonenkis (VC) dimension]
A hypothesis set $\mathcal{H}$ is said to shatter a set of examples $\mathcal{Y}$ if for every subset $\mathcal{Z} \subseteq \mathcal{Y}$ there is a hypothesis $h \in \mathcal{H}$ such that $h(e) = 1$ for every $e \in \mathcal{Z}$ and $h(e) = -1$ for every $e \in \mathcal{Y} \setminus \mathcal{Z}$. The VC dimension of $\mathcal{H}$ is the cardinality of the largest set that is shattered by $\mathcal{H}$.
\end{definition}

\noindent Upper bounds based on the VC dimension are increasing functions of the VC dimension and decreasing functions of the number of examples in the training sample $\mathcal{S}$.
Ideally, the goal is to minimize expected error, but this cannot be evaluated since  $p$ is unknown.  {\em Structural risk minimization} \cite{Vapnik:1995:NSL:211359} helps with this  
if the hypothesis set can be organized into a hierarchy of nested hypothesis classes of increasing VC dimension. It suggests selecting hypotheses that minimize a risk composed of the training set error and a complexity term, e.g.\ if two hypotheses have the same training set error, the one originating from the class with lower VC dimension should be preferred.

\section{Learning from Default Rules}

In this section, we formally describe a new learning setting for possibilistic logic called {\em learning from default rules}. 
We assume a finite alphabet $\Sigma$ is given. An example is a default rule over $\Sigma$ and a hypothesis is a possibilistic logic theory over $\Sigma$. A hypothesis $h$ predicts the class of an example $e = \alpha \snake \beta$ by checking if $h$ covers $e$, in the following sense.
\begin{definition}[Covering]
A hypothesis $h \in \mathcal{H}$  {\em covers} an example $e = \alpha \snake \beta$ if $(h, \alpha) \possent \beta$.
\end{definition}
\noindent The hypothesis $h$ predicts positive, i.e.\ $h(\alpha \snake \beta) = 1$, iff $h$ covers $e$, and else predicts negative, i.e.\ $h(\alpha \snake \beta) = -1$.

\begin{example}
Let us consider the following set of examples
\begin{align*}
\mathcal{S} =& \{ (\textit{bird} \wedge \textit{antarctic} \snake \neg \textit{flies}, 1), (\textit{bird} \snake \neg \textit{flies}, -1) \}
\end{align*}
The following hypotheses over the alphabet $\{\textit{bird},\allowbreak \textit{flies},\allowbreak \textit{antarctic} \}$ cover all positive and no negative examples: 
\begin{align*}
h_1 &= \{ (\textit{bird}, 1), (\textit{antarctic} \rightarrow \neg \textit{flies}, 1)  \}\\
h_2 &= \{ (\textit{flies}, 0.5), (\textit{antarctic} \rightarrow \neg \textit{flies}, 1)  \}\\
h_3 &= \{(\textit{antarctic} \rightarrow \neg \textit{flies}, 1)  \}
\end{align*}
\end{example}
\noindent The learning task can be formally described as follows.
\begin{description} 
\item[Given:]  A multi-set $\mathcal{S}$ which is an iid sample from a set of default rules over a given finite alphabet $\Sigma$. 
\item[Do:] Learn a possiblistic logic theory that covers all positive examples and none of the negative examples in $\mathcal{S}$. 
\end{description}
The above definition assumes that $\mathcal{S}$ is perfectly separable, i.e.\ it is possible to perfectly distinguish positive examples from negative examples. In practice, we often relax this requirement, and instead aim to find a theory that minimizes the training set error. 
Similar to learning in graphical models, this learning task can be decomposed into {\em parameter learning} and {\em structure learning}. In our context, the goal of parameter learning is to convert a set of propositional formulas into a possibilistic logic theory, while the goal of structure learning is to decide what that set of propositional formulas should be.

\subsection{Parameter Learning}

Parameter learning assumes that the formulas of the possibilistic logic theory are fixed, and only the certainty weights need to be assigned. As the exact numerical values of the certainty weights are irrelevant, we will treat parameter learning as the process of finding the most suitable stratification of a given set of formulas, e.g.\ the one which minimizes training error or structural risk (cf.\ Section \ref{sec:vc}).

\begin{example}\label{ex:learning1}
Let 
$\mathcal{S} = \allowbreak\{ (\textit{penguin} \snake \textit{bird}, 1),\allowbreak (\textit{bird} \snake \textit{flies}, 1),\allowbreak (\textit{penguin} \snake \allowbreak \neg \textit{flies},\allowbreak 1),\allowbreak (\snake \textit{bird}, -1),\allowbreak  (\textit{bird} \snake\allowbreak \textit{penguin},\allowbreak -1) \}$
and  $\mathcal{T} = \{ \textit{bird},\allowbreak \textit{flies},\allowbreak \textit{penguin},\allowbreak \neg \textit{penguin} \allowbreak\vee \neg \textit{flies} \}.$
A stratification of $\mathcal{T}$ which minimizes the training error on the examples from $\mathcal{S}$ is 
$
\mathcal{T}^* = \{ (\textit{bird},\allowbreak 0.25),\allowbreak (\textit{penguin},\allowbreak 0.25),\allowbreak (\textit{flies},\allowbreak 0.5),\allowbreak  (\neg \textit{penguin} \vee \neg \textit{flies},\allowbreak 1) \}
$
which is equivalent to 
$\mathcal{T}^{**} = \{(\textit{flies},\allowbreak 0.5),\allowbreak  (\neg \textit{penguin} \vee \neg \textit{flies},\allowbreak 1) \}$ because $\textit{inc}(\mathcal{T}^*) = 0.25$. Note that $\mathcal{T}^{**}$ correctly classifies all examples except $(\textit{penguin} \snake \textit{bird}, 1)$.
\end{example}
\noindent Given a set of examples $\mathcal{S}$, we write $\mathcal{S}^+=\{\alpha |  (\alpha,1) \in \mathcal{S} \}$ and $\mathcal{S}^- = \{ \alpha |  (\alpha,-1) \in \mathcal{S} \}$). A stratification $\mathcal{T}^*$ of a theory $\mathcal{T}$ is a {\em separating stratification} of $\mathcal{S}^+$ and $\mathcal{S}^-$ if it covers all examples from $\mathcal{S}^+$ and no examples from $\mathcal{S}^-$. 

\begin{example}\label{exZrankingComparison}
Let us consider the following set of examples
$
\mathcal{S} = \{ (\snake \neg x, 1), (\snake \neg y, 1), (x \snake a, 1), (y \snake b, 1),
 (x \wedge y \snake a, -1) \}.
$
Let $\mathcal{T} = \{ \neg x, \neg y, \neg x \vee a, \neg y \vee b \}$.
The following stratification is a separating stratification of $\mathcal{S}^+$ and $\mathcal{S}^-$:
$h = \{ (\neg x, 0.25), (\neg x \vee a, 0.5), (\neg y, 0.75), (\neg y \vee b, 1) \}$. Note that the Z-ranking of $\mathcal{S}^+$ also corresponds to a stratification of $\mathcal{T}$, as $\mathcal{T}$ contains exactly the clause representations of the positive examples. However using the Z-ranking leads to a different stratification, which is:
$h_z = \{ (\neg x, 0.5), (\neg y, 0.5), (\neg x \vee a, 1), (\neg y \vee b, 1) \}.$
Note that $h_z(x \wedge y \snake a) = 1$ whereas $h(x \wedge y \snake a) = -1$.
\end{example}


\noindent Because arbitrary stratifications can be chosen, there is substantial freedom to ensure that negative examples are not covered. This is true even when the set of considered formulas is restricted to the clause representations of the positive examples, as seen in Example \ref{exZrankingComparison}. Unfortunately, the problem of finding an optimal stratification is computationally hard.


\begin{theorem}\label{thm-complexity}
Deciding whether a separating stratification exists for given $\mathcal{T}$, $\mathcal{S}^+$ and $\mathcal{S}^-$ is a $\Sigma_2^P$-complete problem.
\end{theorem}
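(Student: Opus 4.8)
The plan is to prove membership in $\Sigma_2^P$ and $\Sigma_2^P$-hardness separately.

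For membership, recall that $\Sigma_2^P = \textit{NP}^{\textit{NP}}$, so it suffices to exhibit a polynomial-size certificate that can be checked with the help of an NP-oracle. I would take the certificate to be the stratification itself: an ordered partition of $\mathcal{T}$ is described by assigning each formula one of at most $|\mathcal{T}|$ certainty levels, which is polynomial in the input. Given a candidate $h$, verifying that it is separating amounts to checking, for every $e = \alpha \snake \beta$ in $\mathcal{S}^+$, that $(h,\alpha) \possent \beta$, and for every $e$ in $\mathcal{S}^-$, that $(h,\alpha) \not\possent \beta$. By the result cited in the Background, each such test reduces to computing the inconsistency level of $h\cup\{(\alpha,1)\}$ (a binary search using $O(\log|\mathcal{T}|)$ consistency checks) followed by a single entailment check; all of these are decidable with polynomially many calls to an NP-oracle. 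Hence the verification lies in $P^{\textit{NP}}$, and guessing $h$ and running it places the problem in $\textit{NP}^{\textit{NP}} = \Sigma_2^P$.

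For hardness I would reduce from the validity problem for $\exists\forall$ quantified Boolean formulas $\Phi = \exists X \forall Y\, \psi(X,Y)$, with $X = \{x_1,\dots,x_n\}$, which is the canonical $\Sigma_2^P$-complete problem. The reduction outputs $\mathcal{T} = \{x_1,\neg x_1,\dots,x_n,\neg x_n\}$, the single positive example $\snake \psi$ with empty antecedent, and $\mathcal{S}^- = \emptyset$. Since adding $(\top,1)$ leaves both the inconsistency level and the logical content of the cut unchanged, a separating stratification exists iff some stratification $h$ of $\mathcal{T}$ satisfies $h_{\overline{\textit{inc}(h)}} \models \psi$. Intuitively, the freedom in ordering the literals of $\mathcal{T}$ plays the role of $\exists X$, while the entailment test inside the covering condition plays the role of $\forall Y$; note that this already yields hardness without any negative examples.

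The crux is a two-way correctness argument. For the forward direction I would observe that the strict cut $h_{\overline{\textit{inc}(h)}}$ of any stratification is, by the very definition of the inconsistency level, a \emph{consistent} subset of $\{x_i,\neg x_i\}$, hence a (possibly partial) truth assignment $\rho$ to $X$. If $\rho \models \psi$, then every total extension of $\rho$ satisfies $\psi$; fixing any total $X$-assignment $X^\ast$ extending $\rho$, every choice of $Y$ gives a model of $\rho$, so $\forall Y\,\psi(X^\ast,Y)$ holds and $\Phi$ is valid. Conversely, if $\Phi$ is valid with witness $X^\ast$, I place the literals agreeing with $X^\ast$ on a strictly higher stratum than their complements; the cut at the inconsistency level is then exactly $X^\ast$, which entails $\psi$ because $\forall Y\,\psi(X^\ast,Y)$ holds, so $\snake \psi$ is covered.

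The main obstacle I anticipate is precisely this forward direction: one must argue that no exotic stratification can cover the example without secretly exhibiting a satisfying assignment. This hinges on the fact that the cut at the inconsistency level is always consistent (so it can never contain both $x_i$ and $\neg x_i$) and on translating entailment from a partial assignment into the universally quantified statement $\forall Y\,\psi$. The membership side is comparatively routine once the inconsistency level is computed via the cited SAT-based procedure.
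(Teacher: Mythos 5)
Your proof is correct and follows the same skeleton as the paper's: membership by guess-and-check with an NP oracle (which the paper dismisses as trivial, and which you rightly flesh out via the $O(\log k)$ SAT-call procedure for $\possent$), and hardness by reduction from $\exists X \forall Y\,\psi(X,Y)$ using the literal pairs $\{x_i,\neg x_i\}$, so that the ordering freedom of the stratification simulates the existential choice while the entailment test inside the covering relation simulates $\forall Y$. The one substantive divergence is how the matrix enters the instance: the paper adds the formula $\Phi(X,Y)\rightarrow \textit{aux}$ to $\mathcal{T}$ and uses the positive example $\snake \textit{aux}$, whereas you place the matrix $\psi$ directly as the consequent of the sole positive example, leaving $\mathcal{T}$ a pure set of literals. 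Both are sound; the trade-off is that the paper's indirection keeps the example's consequent a single atom (hence a clause, matching the restricted format assumed by the heuristic algorithm of Section~\ref{secHeuristicAlgorithm}, so hardness holds already for such examples), while your variant shows hardness even when the candidate theory consists solely of literals, at the price of allowing an arbitrary formula as consequent. Your forward direction --- the strict cut at the inconsistency level is consistent by definition, hence a partial assignment $\rho$ to $X$, and $\rho\models\psi$ forces $\forall Y\,\psi(X^\ast,Y)$ for any total extension $X^\ast$ of $\rho$ --- is the same argument the paper makes when it sets $X'=\{x\in X : x\in L^\ast \mbox{ or } \neg x\in L^\ast\}$ and concludes $\forall X''\,\forall Y:\Phi(X\theta,Y)$; if anything, your phrasing via consistency of the cut is cleaner than the paper's ``w.l.o.g.\ two levels'' step, which implicitly collapses all strata above the inconsistency level into one.
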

\begin{proof}
The proof of the membership result is trivial. 
We show the hardness result by reduction from the $\Sigma_2^P$-complete problem of deciding the satisfiability of quantified Boolean formulas of the form $\exists X \forall Y : \Phi(X,Y)$ where $X$ and $Y$ are vectors of propositional variables and $\Phi(X,Y)$ is a propositional formula. Let 
$\mathcal{T} = X \cup \{ \neg x : x \in X \} \cup \{ \Phi(X,Y) \rightarrow \textit{aux} \}$
be a propositional theory, let $\mathcal{S}^+ = \{ \snake \textit{aux} \}$ and $\mathcal{S}^- = \emptyset$. We need to show that $\exists X \forall Y : \Phi(X,Y)$ is satisfiable if and only if there exists a separating stratification for $\mathcal{T}$, $\mathcal{S}^+$ and $\mathcal{S}^-$. ($\Rightarrow$) Let $\theta$ be an assignment of variables in $X$ such that $\forall Y : \Phi(X\theta,Y)$ is true. Then we can construct the separating stratification as 
\begin{align*}
(\{\Phi(X,Y) \rightarrow \textit{aux} \} \cup \{ x \in X : x\theta = 1 \} \\
\cup \{\neg x : x \in X \mbox{ and } x\theta = 0 \}, \\
\{ \neg x : x \in X \mbox{ and } x\theta = 1 \} \cup \{ x \in X : x\theta = 0 \}).
\end{align*}
Since $\Phi(X,Y)$ will always be true in any model consistent with the highest level of the stratification, because of the way we chose $x$ and $\neg x$ for this level, so will $\textit{aux}$.
($\Leftarrow$) Let $\mathcal{T}^*$ be a stratification of $\mathcal{T}$ which entails the default rule $\snake \textit{aux}$. We can assume w.l.o.g. that $\mathcal{T}^*$ has only two levels. Since $\mathcal{T}^*$ is a separating stratification, we must have $\mathcal{T}^* \possent \textit{aux}$. Therefore the highest level $L^*$ of $\mathcal{T}^*$ must be a consistent theory and $\Phi(X,Y)$ must be true in all of its models.
Let $X' = \{ x \in X : x \in L^* \mbox{ or } \neg x \in L^* \}$ and $X'' = X\setminus X'$. We can construct an assignment $\theta$ to variables in $X'$ by setting $x\theta = 1$ for $x \in L^*$ and $x\theta = 0$ for $\neg x \in L^*$. It follows from the construction that $\forall X'' \forall Y : \Phi(X\theta,Y)$ must be true.
\end{proof}
\noindent As this result reveals, in practice we will need to rely on heuristic methods for parameter learning. In Section \ref{secHeuristicAlgorithm} we will propose such a heuristic method, which will moreover also include structure learning.

\subsection{VC Dimension of Possibilistic Logic Theories}\label{sec:vc}

We explore the VC dimension of the set of possible stratifications of a propositional theory, as this will allow us to provide probabilistic bounds on the generalization ability of a learned possibilistic logic theory.
Let us write $\textit{Strat}(\mathcal{T})$ for the set of all stratifications of a propositional theory $\mathcal{T}$, and let $\textit{Strat}^{(k)}(\mathcal{T})$ be the set of all stratifications with at most $k$ levels. The following proposition provides an upper bound for the VC dimension and can be proved by bounding the cardinality of $\textit{Strat}^{(k)}(\mathcal{T})$.

\begin{proposition}\label{prop:upperbound}
Let $\mathcal{T}$ be a set of $n$ propositional formulas. Then $VC(\textit{Strat}^{(k)}(\mathcal{T})) \leq n \log_2{k}$. 
\end{proposition}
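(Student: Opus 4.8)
The plan is to exploit the elementary fact that for any \emph{finite} hypothesis class $\mathcal{H}$ one has $VC(\mathcal{H}) \leq \log_2 |\mathcal{H}|$, and to combine it with a crude count of the stratifications in $\textit{Strat}^{(k)}(\mathcal{T})$. First I would establish the generic inequality. If $\mathcal{H}$ shatters a set of $d$ examples, then by definition each of the $2^d$ possible labelings of that set is realized by some $h \in \mathcal{H}$, and distinct labelings demand distinct hypotheses; hence $2^d \leq |\mathcal{H}|$, so every shattered set has size at most $\log_2 |\mathcal{H}|$, which gives $VC(\mathcal{H}) \leq \log_2 |\mathcal{H}|$.

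Second, I would over-count the stratifications. Every element of $\textit{Strat}^{(k)}(\mathcal{T})$ is an ordered partition of the $n$ formulas of $\mathcal{T}$ into at most $k$ blocks, and can be encoded by a map $\mathcal{T} \to \{1,\dots,k\}$ that sends each formula to the index of the block containing it (leaving the surplus indices unused when there are fewer than $k$ blocks). This encoding is injective, so $|\textit{Strat}^{(k)}(\mathcal{T})| \leq k^n$. Combining the two steps, the number of \emph{distinct} classification functions induced by $\textit{Strat}^{(k)}(\mathcal{T})$ is at most $k^n$, and plugging this into the generic bound yields $VC(\textit{Strat}^{(k)}(\mathcal{T})) \leq \log_2(k^n) = n \log_2 k$, as required.

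The step I would be most careful about, and the only genuine subtlety, is keeping the syntactic and semantic objects apart: a stratification is a syntactic object, whereas a hypothesis is the $\{-1,1\}$-valued function it induces via the covering relation. The map from stratifications to induced functions need not be injective, so many stratifications may collapse onto the same hypothesis; but this only \emph{decreases} the effective size of the class, so $k^n$ remains a valid upper bound on the number of realizable functions, which is exactly what the VC argument consumes. Since no tightness or lower-bound claim is being made, the crude count $k^n$ (rather than an exact enumeration of ordered set partitions into at most $k$ blocks) is entirely sufficient, and I would not attempt the sharper combinatorial count.
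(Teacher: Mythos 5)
Your proof is correct and follows exactly the route the paper sketches for Proposition~\ref{prop:upperbound}: bounding $|\textit{Strat}^{(k)}(\mathcal{T})|$ by $k^n$ via the level-assignment encoding and applying the finite-class bound $VC(\mathcal{H}) \leq \log_2|\mathcal{H}|$. Your remark that the map from stratifications to induced hypotheses need not be injective, and that this only helps the bound, is a valid and careful observation consistent with the paper's argument.
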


\noindent In the next theorem, we establish a lower bound on the VC dimension of stratifications with at most $k$ levels which shows that the above upper bound is asymptotically tight. 

\begin{theorem}\label{thm-vc2}
For every $k, n$, $k \leq n$, there is a propositional theory $\mathcal{T}$ consisting of $n$ formulas such that $$VC(\textit{Strat}^{(k)}(\mathcal{T})) \geq \frac{1}{4} n (\log_2{k}-1).$$
\end{theorem}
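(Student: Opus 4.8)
The plan is to prove the lower bound by exhibiting, for each pair $k,n$ with $k \le n$, an explicit theory $\mathcal{T}$ of $n$ formulas together with a set $E$ of roughly $\tfrac14 n(\log_2 k - 1)$ example defaults that is shattered by $\textit{Strat}^{(k)}(\mathcal{T})$. The guiding intuition is dual to Proposition~\ref{prop:upperbound}: the bound $VC \le n\log_2 k$ comes from the fact that a stratification into at most $k$ levels carries about $\log_2 k$ ``ordinal bits'' per formula, so the task is to design $\mathcal{T}$ and $E$ so that a constant fraction of these bits can actually be set and read off \emph{independently} by individual default rules.

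First I would reduce the global problem to a single self-contained gadget. I build $\mathcal{T}$ as a disjoint union of blocks $\mathcal{T}_1,\dots,\mathcal{T}_b$, where each block is over its own private set of propositional variables and is internally consistent, and I arrange that every example in $E$ mentions the variables of exactly one block. The key observation is that, since the blocks are variable-disjoint and individually consistent, adding the premise $\alpha$ of an example at the top certainty level can only create inconsistency inside the block that $\alpha$ refers to; hence $\textit{inc}(\mathcal{T}^*\cup\{(\alpha,1)\})$, and therefore the upper part that decides $(\mathcal{T}^*,\alpha)\possent\beta$, are determined entirely within that block, regardless of how the other blocks are stratified. Moreover several blocks may share the same global levels, so $b$ blocks still only use $\le k$ levels jointly. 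Consequently $VC(\textit{Strat}^{(k)}(\mathcal{T}))$ is at least the sum over blocks of the number of examples each block shatters, and it suffices to build one block on $c$ formulas that shatters at least $\tfrac14 c(\log_2 k - 1)$ examples and then take $b=\lfloor n/c\rfloor$ disjoint copies, absorbing the rounding and the unused formulas into the constant and the additive $-1$.

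The core of the argument, and the step I expect to be the main obstacle, is the construction of this single block. The crucial device is that the level of a formula does not act as a mere monotone threshold: raising a formula to a higher level can simultaneously place it into the consistent upper part \emph{and} raise the inconsistency cutoff, thereby expelling other formulas from that upper part. I would exploit this coupling to encode an arbitrary bit vector in the stratification and to design, for each target bit, a default $\alpha_j \snake \beta_j$ whose premise forces $\textit{inc}$ to a prescribed position and whose conclusion is entailed by the resulting upper part exactly when that bit is set. The difficulty is twofold: showing that the $\approx \tfrac14 c(\log_2 k-1)$ bits are \emph{independently} settable, so that all $2^{|E|}$ labelings are realized by some stratification into at most $k$ levels; and verifying that the readout rules of one block do not interfere with one another. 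This is precisely where the $\log_2 k$ factor (rather than a constant) is obtained, and where the constant $\tfrac14$ and the term $-1$ enter, as artifacts of the encoding overhead and of the regime $k\le n$.

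Finally, once the single-block shattering claim is established, the theorem follows from the decoupling observation of the second step: the $b=\lfloor n/c\rfloor$ disjoint copies shatter the union of their example sets, giving $VC(\textit{Strat}^{(k)}(\mathcal{T})) \ge b\cdot \tfrac14 c(\log_2 k-1) \ge \tfrac14 n(\log_2 k-1)$ after accounting for rounding. Comparing with Proposition~\ref{prop:upperbound} then shows that the upper bound $n\log_2 k$ is asymptotically tight.
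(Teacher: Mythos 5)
Your outer reduction is exactly the paper's: the paper likewise takes $\mathcal{T}$ to be a disjoint union of $n/k$ blocks of $k$ variables each, shatters a set of defaults inside each block, and merges the per-block stratifications by level-wise unions so that the whole theory still uses at most $k$ levels; variable-disjointness guarantees non-interference, just as you argue. The problem is that everything you label as ``the main obstacle'' --- the single-block gadget --- is the actual mathematical content of the theorem, and your proposal never constructs it; you only state what it must accomplish. Filling it is not routine. First, you need a family of $\Theta(k \log_2 k)$ constraints on a permutation of $k$ formulas that are \emph{mutually independent}, i.e.\ every one of the $2^{|E|}$ sign patterns is realized by some permutation. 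Naive choices fail: pairwise comparisons $x_i < x_j$ are coupled by transitivity, so no more than a linear number of them can be shattered. The paper's Lemma~\ref{lemma:orders} solves this with a carefully chosen recursive family of inequalities of the form ``the $i$-th largest element of one block precedes the $i$-th largest element of a sibling block,'' and proves full independence by induction via a topological-sort argument on an auxiliary directed graph; nothing in your sketch exhibits such a family or explains why independence would hold for it.

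Second, you need a readout: a single default whose coverage under $\possent$ tests exactly one such order constraint. The paper's Lemma~\ref{lemma:pos_order} achieves this with cardinality formulas: the premise $\textit{at-least}_{m_y-k+1}(\neg y_1,\dots,\neg y_{m_y})$ forces the inconsistency level so that exactly $k-1$ of the levels carrying $y$-variables survive the strict cut, and the conclusion $\textit{at-least}_{k}(z_1,\dots,z_{m_z})$ is then entailed iff the $k$-th largest $z$ sits strictly above the $k$-th largest $y$ in the stratification order. Your description of ``raising a formula expels others from the upper part'' correctly identifies the drowning mechanism this exploits, but without exhibiting the formulas and verifying both directions of the biconditional, the claim that roughly $\tfrac{1}{4} c (\log_2 k - 1)$ independently settable bits exist per block of $c$ formulas is unsupported --- and this is precisely where the $\log_2 k$ factor, as opposed to a constant, is earned. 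As written, your argument establishes only the easy composition step (Lemma~\ref{lemma:vc3} lifted to blocks, as in the paper's own proof of the theorem) while assuming the hard shattering core.
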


\noindent To prove Theorem \ref{thm-vc2}, we need the following lemmas; some straightforward proofs are omitted due to space constraints.

\begin{lemma}\label{lemma:orders}
If $\mathcal{S}$ is a totally ordered set, let $\textit{kth}(\mathcal{S},i)$ denote the $i$-th highest element of $\mathcal{S}$.
Let $\mathcal{X} = \{x_1,\dots,x_n\}$ be a set of cardinality $n = 2^m$ where $m \in \mathbb{N}\setminus \{ 0 \}$. Let 
\begin{align*}
\mathcal{C} &= \{ x_1 < x_2, x_3 < x_4, \dots, x_{n-1} < x_{n}, \\
&\textit{kth}(\{x_1,x_2\}, 1) < \textit{kth}(\{x_3,x_4\}, 1), \\
&\textit{kth}(\{x_1,x_2\}, 2) < \textit{kth}(\{x_3,x_4\}, 2), \\
&\textit{kth}(\{x_5,x_6\}, 1) < \textit{kth}(\{x_7,x_8\}, 1), \\
&\textit{kth}(\{x_5,x_6\}, 2) < \textit{kth}(\{x_7,x_8\}, 2), \\
&\dots \\
&\textit{kth}(\{x_1,x_2,x_3,x_4\}, 1) < \textit{kth}(\{x_5,x_6,x_7,x_8\}, 1), \\
&\textit{kth}(\{x_1,x_2,x_3,x_4\}, 2) < \textit{kth}(\{x_5,x_6,x_7,x_8\}, 2), \\
&\textit{kth}(\{x_1,x_2,x_3,x_4\}, 3) < \textit{kth}(\{x_5,x_6,x_7,x_8\}, 3), \\
&\textit{kth}(\{x_1,x_2,x_3,x_4\}, 4) < \textit{kth}(\{x_5,x_6,x_7,x_8\}, 4), \\
&\dots \\
& \textit{kth}(\{x_1,\dots,x_{n/2}\}, n/2) < \textit{kth}(\{x_{n/2+1},\dots,x_{n}\}, n/2) \}
\end{align*}
be a set of $\frac{1}{2} n \log_2{n}$ inequalities. Then for any $\mathcal{C}' \subseteq \mathcal{C}$ there is a permutation of $\mathcal{X}$ satisfying all constraints from $\mathcal{C}'$ and no constraints from $\mathcal{C}\setminus \mathcal{C}'$.
\end{lemma}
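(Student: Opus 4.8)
The plan is to prove the lemma by \emph{induction on the block size}, following the recursive, merge-sort-like structure that underlies the definition of $\mathcal{C}$. The first step is to make this structure explicit: the elements $\{x_1,\dots,x_n\}$ are organised into a balanced binary tree in which, at \emph{level} $\ell$, each block $B$ of size $2^\ell$ is split into a left half $L$ and a right half $R$ of size $t = 2^{\ell-1}$, and the inequalities associated with $B$ at this level are exactly $\textit{kth}(L,j) < \textit{kth}(R,j)$ for $j = 1,\dots,t$. Two structural observations drive everything: (i) every inequality of $\mathcal{C}$ belongs to exactly one block at exactly one level, and the inequalities of a block $B$ compare only elements of $B$; and (ii) the inequalities lying strictly inside $L$ (resp.\ $R$) are determined \emph{solely by the relative order} of the elements of $L$ (resp.\ $R$) among themselves, independently of how the values of $L$ and $R$ are interleaved.

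Given these observations I would set up the induction as follows. For a block $B = L \cup R$ and a target subset $\mathcal{C}'$ restricted to the inequalities inside $B$, I first apply the induction hypothesis separately to $L$ and to $R$, obtaining relative orders of their elements that realise exactly the desired internal inequalities; this fixes the sorted sequences $a_1 > \dots > a_t$ of the elements of $L$ and $b_1 > \dots > b_t$ of those of $R$. It then remains to assign actual values to these $2t$ elements so that the top-level sign pattern (whether $a_j < b_j$ or $a_j > b_j$ for each $j$) matches $\mathcal{C}'$, \emph{while preserving the two decreasing orders} so that the recursive choices inside $L$ and $R$ are not disturbed. This is exactly where observation (ii) pays off: since the internal inequalities depend only on relative order, the merge is free to move $L$ and $R$ together or apart, as long as $a_1 > \dots > a_t$ and $b_1 > \dots > b_t$ survive.

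The main technical ingredient, and the step I expect to be the real obstacle, is therefore a \emph{merge lemma}: two decreasing sequences $a_1 > \dots > a_t$ and $b_1 > \dots > b_t$ can be interleaved so as to realise an \emph{arbitrary} prescribed sign pattern $(\mathrm{sign}(a_j - b_j))_{j=1}^{t}$. I would prove this by a simple pairing construction: reserve the two rank positions $\{2j-1, 2j\}$ for the pair $\{a_j, b_j\}$, and within that pair decide which element occupies the higher position $2j-1$ according to the prescribed sign. Because every position allotted to $a_j$ precedes all positions allotted to $a_{j+1}$, both sequences automatically remain decreasing, and each sign is chosen fully independently of the others.

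Finally, I would assemble the pieces by applying the construction at the root block $B = \mathcal{X}$: the induction yields a total order, hence a permutation of $\mathcal{X}$, whose satisfied inequalities are precisely those of $\mathcal{C}'$, using that distinct values leave no ties so that each inequality of $\mathcal{C} \setminus \mathcal{C}'$ is \emph{strictly} violated. A short bookkeeping check then confirms that summing the $2^{\ell-1}$ inequalities of each of the $n/2^\ell$ blocks over all levels $\ell = 1,\dots,\log_2 n$ accounts for every one of the $\tfrac{1}{2} n \log_2 n$ elements of $\mathcal{C}$, so that no constraint is left unaddressed.
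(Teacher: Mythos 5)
Your proof is correct, and its skeleton coincides with the paper's: both argue by induction on the recursive halving structure underlying $\mathcal{C}$, apply the induction hypothesis to the two halves $L$ and $R$, and then show that the two resulting internal orders can be interleaved so as to realise an \emph{arbitrary} sign pattern on the cross comparisons $\textit{kth}(L,j)$ versus $\textit{kth}(R,j)$, relying on the same key fact you isolate as observation (ii) — that any interleaving preserves which element is the $j$-th highest of each half, so the lower-level inequalities are undisturbed. Where you diverge is in how the merge step is justified. The paper builds an auxiliary directed graph: one directed path per half (in permutation order) plus one cross edge per rank $j$, oriented according to whether the corresponding inequality lies in $\mathcal{C}'$; it asserts that this graph is acyclic (``easy to check'') and takes a topological order. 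Your merge lemma instead constructs one specific such order explicitly, allotting positions $2j-1$ and $2j$ to the pair $\{a_j,b_j\}$ and flipping within the pair according to the prescribed sign. This is a concrete instance of the same idea — your interleaving is one particular topological order of the paper's graph — but it buys a cleaner verification: the preservation of both decreasing sequences and the mutual independence of the $t$ sign choices follow immediately from the positional bookkeeping, so no acyclicity argument is left to the reader. Your closing count that the blocks at levels $\ell = 1,\dots,\log_2 n$ contribute $n/2$ inequalities each, exhausting all $\frac{1}{2} n \log_2 n$ elements of $\mathcal{C}$, likewise makes explicit a point the paper leaves implicit.
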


\begin{lemma}\label{lemma:pos_order}
Let $\textit{at-least}_{k}(x_1,x_2,\dots,x_n)$ denote a Boolean formula which is true if and only if at least $k$ of the arguments are true. Let $\mathcal{X} = \{x_1,\dots,x_m\}$ be a set of propositional logic variables and $\pi = (x_{i_1}, x_{i_2}, \dots, x_{i_m})$ be a permutation of elements from $\mathcal{X}$. Let $0 \leq k \leq \min\{m_y,m_z\}$. Let $$\mathcal{T}^* = \{(x_{i_1}, 1/m), (x_{i_2},1/(m-1)), \dots, (x_{i_m}, 1)\}$$ be a possibilistic logic theory. Let $\mathcal{Y} = \{y_1,\dots,y_{m_y}\}$ and $\mathcal{Z} = \{z_1,\dots,z_{m_z} \}$ be disjoint subsets of $\mathcal{X}$. Then 
\begin{align*}
(\mathcal{T}^*, \textit{at-least}_{m_y-k+1}(\neg y_1,\neg y_2,\dots,\neg y_{m_y})) \possent \\
\textit{at-least}_{k}(z_1,z_2,\dots,z_{m_z})
\end{align*}
iff $\textit{kth}(\{y_1,\dots,y_{m_y}\}, k) < \textit{kth}(\{z_1,\dots,z_{m_z}\},k)$ 
w.r.t.\ the ordering given by the permutation $\pi$.
\end{lemma}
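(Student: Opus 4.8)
The plan is to reduce the possibilistic entailment on the left-hand side to a purely combinatorial statement about which facts survive in the strict cut at the inconsistency level, and then match that statement to the ordering condition on the right. Throughout, write $A = \textit{at-least}_{m_y-k+1}(\neg y_1,\dots,\neg y_{m_y})$ and $\Theta = \mathcal{T}^* \cup \{(A,1)\}$, so that the left-hand side is $\Theta \possent \textit{at-least}_{k}(z_1,\dots,z_{m_z})$. The first observation is that $A$ is logically equivalent to ``at most $k-1$ of the $y_i$ are true''. The second is bookkeeping about the weights: the theory $\mathcal{T}^*$ assigns $x_{i_j}$ the weight $1/(m-j+1)$, which strictly increases along the permutation $\pi$; hence all weights are distinct and ``higher weight'' coincides with ``later in $\pi$'', so that $\textit{kth}(\mathcal{S},i)$ picks out the element of $\mathcal{S}$ of $i$-th highest weight and $a<b$ (w.r.t.\ $\pi$) means $\text{weight}(a)<\text{weight}(b)$. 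The hypothesis $k\le\min\{m_y,m_z\}$ guarantees that the quantities below are well defined.

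Next I would compute $\textit{inc}(\Theta)$. Every formula of $\Theta$ other than $A$ is a positive literal $x_{i_j}$, and any set of positive literals is consistent on its own; therefore the only possible source of inconsistency in a cut is a clash between the asserted positive $y$-facts and $A$. Concretely, the strict cut $\Theta_{\overline{\lambda}}$ forces exactly those $y_i$ whose weight exceeds $\lambda$ to be true, so it is inconsistent iff at least $k$ of the $y_i$ have weight $>\lambda$ (since $A$ permits at most $k-1$ true $y_i$). Letting $w^{y}_{k}$ denote the $k$-th highest weight among $y_1,\dots,y_{m_y}$, this shows $\Theta_{\overline{\lambda}}$ is consistent iff $\lambda\ge w^{y}_{k}$, and hence $\textit{inc}(\Theta)=w^{y}_{k}$. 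By definition the relevant inference base is then $\Theta_{\overline{w^{y}_{k}}}$, which consists of $A$ together with all $x$-facts of weight strictly greater than $w^{y}_{k}$; in particular the $k$-th highest $y$ is excluded, and it is precisely the strictness of the cut that leaves exactly $k-1$ forced $y$'s and restores consistency.

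It remains to evaluate $\Theta_{\overline{w^{y}_{k}}}\models \textit{at-least}_{k}(z_1,\dots,z_{m_z})$. Because $A$ mentions only variables in $\mathcal{Y}$ and $\mathcal{Y}\cap\mathcal{Z}=\emptyset$, the base contains no clause relating the $z$-variables to anything else; its entire content about $\mathcal{Z}$ is the list of $z_i$ that appear as explicit facts, i.e.\ those with weight $>w^{y}_{k}$. Consequently the base entails that at least $k$ of the $z_i$ are true iff at least $k$ of the $z_i$ are themselves asserted (one can always falsify every unasserted $z_i$ in a model of the base), i.e.\ iff the $k$-th highest weight $w^{z}_{k}$ among $z_1,\dots,z_{m_z}$ satisfies $w^{z}_{k}> w^{y}_{k}$. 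Translating back through the weight/permutation dictionary from the first paragraph, $w^{z}_{k}>w^{y}_{k}$ is precisely $\textit{kth}(\{y_1,\dots,y_{m_y}\},k)<\textit{kth}(\{z_1,\dots,z_{m_z}\},k)$, which is the claimed equivalence.

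I expect the two places needing the most care to be the inconsistency-level computation and the strict-versus-nonstrict cut. The argument that positive-literal facts cannot by themselves be inconsistent---so that $A$ is the sole driver of inconsistency---must be stated cleanly, and one has to track that at $\lambda=w^{y}_{k}$ the $k$-th highest $y$ sits exactly at the threshold and is dropped by the strict cut. A minor additional check is the boundary case $w^{y}_{k}=1$ (possible only when $k=1$ and the top $y$ is $x_{i_m}$), where the base is empty; there both sides fail, since $w^{z}_{k}>1$ is impossible and the empty base cannot entail $\textit{at-least}_1$, so the equivalence still holds.
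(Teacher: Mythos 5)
Your proof is correct and takes essentially the same route as the paper's: you compute the inconsistency (``drowning'') level of $\mathcal{T}^*\cup\{(A,1)\}$, showing it equals the weight of the $k$-th highest element of $\mathcal{Y}$ so that exactly $k-1$ of the $y$-facts survive the strict cut, and then observe that the cut entails $\textit{at-least}_{k}(z_1,\dots,z_{m_z})$ iff at least $k$ of the $z$-facts lie strictly above that level, which translates to the stated ordering condition. Your write-up is merely more explicit than the paper's sketch (explicit model constructions and the boundary case where the $k$-th highest $y$ has weight $1$), but the underlying argument is identical.
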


\begin{lemma}\label{lemma:vc3}
For every $n = 2^m$ there is a propositional theory $\mathcal{T}$ consisting of $n$ formulas such that $$VC(\textit{Strat}(\mathcal{T})) \geq \frac{1}{2} n \log_2{n}.$$
\end{lemma}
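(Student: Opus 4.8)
The plan is to exhibit, for $\mathcal{T} = \{x_1,\dots,x_n\}$ with $n = 2^m$ (each atom viewed as a one-clause formula), a set of $\frac{1}{2}n\log_2 n$ default-rule examples that is shattered by $\textit{Strat}(\mathcal{T})$, thereby certifying the claimed lower bound directly from the definition of VC dimension. The two ingredients are already in place: Lemma \ref{lemma:orders} produces a family $\mathcal{C}$ of exactly $\frac{1}{2}n\log_2 n$ order constraints on $\mathcal{X}$ that is ``independently realizable'' (every subset is the exact satisfied-set of some permutation), and Lemma \ref{lemma:pos_order} translates each such constraint into a possibilistic entailment. I would glue these together so that permutations play the role of stratifications and order constraints play the role of example labels.

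Concretely, restrict attention to the stratifications in which every $x_i$ forms its own singleton level; these are exactly the theories $\mathcal{T}^*$ of Lemma \ref{lemma:pos_order}, one for each permutation $\pi$ of $\mathcal{X}$, and they form a subset of $\textit{Strat}(\mathcal{T})$, so any set they shatter is also shattered by $\textit{Strat}(\mathcal{T})$. Each inequality in $\mathcal{C}$ has the form $\textit{kth}(A,k) < \textit{kth}(B,k)$ for disjoint equal-size subsets $A,B \subseteq \mathcal{X}$; to it I associate the single example
$$ e_{A,B,k} \;=\; \textit{at-least}_{|A|-k+1}(\neg a_1,\dots,\neg a_{|A|}) \;\snake\; \textit{at-least}_{k}(b_1,\dots,b_{|B|}). $$
Since distinct constraints use distinct triples $(A,B,k)$, the resulting $\frac{1}{2}n\log_2 n$ examples are pairwise distinct and form a legitimate candidate shattering set $\mathcal{E}$.

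The key step is then a bijective bookkeeping argument. By Lemma \ref{lemma:pos_order}, the stratification $\mathcal{T}^*_\pi$ covers $e_{A,B,k}$ if and only if the constraint $\textit{kth}(A,k) < \textit{kth}(B,k)$ holds in the order induced by $\pi$. Hence the subset of $\mathcal{E}$ covered by $\mathcal{T}^*_\pi$ is precisely the image, under the constraint-to-example correspondence, of the subset of $\mathcal{C}$ satisfied by $\pi$. Given any target labelling, i.e.\ any $\mathcal{E}' \subseteq \mathcal{E}$, let $\mathcal{C}'$ be the corresponding set of constraints; Lemma \ref{lemma:orders} supplies a permutation $\pi$ satisfying exactly $\mathcal{C}'$ and no constraint of $\mathcal{C}\setminus\mathcal{C}'$, so $\mathcal{T}^*_\pi$ covers exactly the examples of $\mathcal{E}'$. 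This shows $\mathcal{E}$ is shattered, and since $|\mathcal{E}| = |\mathcal{C}| = \frac{1}{2}n\log_2 n$, the bound follows.

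I expect the substantive difficulty to have been absorbed into the two supporting lemmas rather than the assembly itself: Lemma \ref{lemma:orders} is the genuinely combinatorial heart (building a logarithmic-density constraint family all of whose subsets are simultaneously satisfiable and co-violable by a single permutation), and Lemma \ref{lemma:pos_order} carries the possibilistic-logic content. The only points needing care in the present argument are (i) verifying that the singleton-level stratifications really are members of $\textit{Strat}(\mathcal{T})$, so that the lower bound transfers, and (ii) checking that the map from constraints to examples is injective, so that $|\mathcal{E}|$ indeed equals $\frac{1}{2}n\log_2 n$ rather than collapsing; both are immediate from the explicit forms above.
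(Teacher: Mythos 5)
Your proof is correct and is essentially the paper's own argument: the same theory $\mathcal{T}=\{x_1,\dots,x_n\}$, the same constraint-to-default translation turning each inequality $\textit{kth}(A,k)<\textit{kth}(B,k)$ from Lemma~\ref{lemma:orders} into an example $\textit{at-least}_{|A|-k+1}(\neg a_1,\dots) \snake \textit{at-least}_k(b_1,\dots)$, and the same appeal to Lemma~\ref{lemma:pos_order} to identify covered examples with satisfied constraints. Your explicit checks (that permutation-induced singleton-level stratifications lie in $\textit{Strat}(\mathcal{T})$ and that the constraint-to-example map is injective) are details the paper leaves implicit, but the route is identical.
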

\begin{proof}
Let $\mathcal{T} = \{x_1,\dots,x_n\}$ be a set of propositional variables where $n = 2^m$, $m \in N\setminus \{ 0 \}$ and let $\mathcal{C}$ be defined as in Lemma \ref{lemma:orders}. Let $\mathcal{D} = $
\begin{align*}
&\{ \textit{at-least}_{l-k+1}(\neg x_{i_1},\dots,\neg x_{i_l}) \snake \textit{at-least}_{k}(x_{j_1},\dots,x_{j_l}) | \\
&\quad\quad(\textit{kth}(\{x_{i_1},\dots,x_{i_l}\}, k) < \textit{kth}(\{x_{j_1},\dots,x_{j_l}\}, k)) \in \mathcal{C}\},
\end{align*}
i.e.\ $\mathcal{D}$ contains one default rule for every inequality from $\mathcal{C}$. It follows from Lemma \ref{lemma:orders} and Lemma \ref{lemma:pos_order} that the set $\mathcal{D}$ can be shattered by stratifications of the propositional theory $\mathcal{T}$. The cardinality of $\mathcal{D}$ is $\frac{1}{2} n \log_2 n$. Therefore the VC dimension of stratifications of $\mathcal{T}$ is at least $\frac{1}{2} n \log_2 n$.
\end{proof}

\begin{proof}[Proof of Theorem \ref{thm-vc2}]
We show that if $k$ and $n$ are powers of two then $\frac{1}{2} n \log_2 k$ is a lower bound of the VC dimension. The general case of the theorem then follows straightforwardly. Let $\mathcal{T} = \bigcup_{i=1}^{\frac{n}{k}}\{ x_{(i-1) \cdot k + 1},\allowbreak\dots,\allowbreak x_{i \cdot k} \}$ and let $\mathcal{D}_i$ be a set of default rules of cardinality $\frac{1}{2} k \log_2 k$ shattered by $\textit{Strat}(\{ x_{(i-1) \cdot k + 1},\dots, x_{i \cdot k} \})$. It follows from Lemma \ref{lemma:vc3} that such a set $\mathcal{D}_i$ always exists. Let $\mathcal{D} = \bigcup_{i=1}^{\frac{n}{k}} \mathcal{D}_i$. Then $\mathcal{D}$ has cardinality $\frac{1}{2} n \log_2 k$ and is shattered by $\textit{Strat}^{(k)}(\mathcal{T})$. To see that the latter holds, note that the sets of formulas $\textit{Strat}(\{ x_{(i-1) \cdot k + 1},\dots, x_{i \cdot k} \})$ are disjoint. Therefore, if we want to find a stratification from $\textit{Strat}^{(k)}(\mathcal{T})$ which covers only examples from an arbitrary set $\mathcal{D}' \subseteq \mathcal{D}$ and no other examples from $\mathcal{D}$ then we can merge stratifications of $\{ x_{(i-1) \cdot k + 1},\dots, x_{i \cdot k} \}$ which cover exactly the examples from $\mathcal{D}_i \cap \mathcal{D}'$, where merging stratifications is done by level-wise unions.
\end{proof}

\noindent Combining the derived lower bounds and upper bounds on the VC dimension together with the structural risk minimization principle, we find that given two stratifications with the same training set error rate, we should prefer the one with the fewest levels. 
Furthermore, when structure learning is used, it is desirable for learned theories to be compact. A natural learning problem then consists in selecting a small subset of $\mathcal{T}$, where $\mathcal{T}$ corresponds to the set of formulas considered by the structure learner, and identifying a stratification only for that subset. The results in this section can readily be extended to provide bounds on the VC dimension of this problem. Let $\mathcal{T}$ be a propositional theory of cardinality $n$ and let $m < n$ be a positive integer. The VC dimension of the set of hypotheses involving at most $m$ formulas from $\mathcal{T}$ and having at most $k$ levels is bounded by $m (\log_2 n + \log_2 k)$. This can simply be obtained by upper-bounding the number of the different stratifications with at most $k$ levels and $m$ formulas selected from a set of cardinality $n$, by $n^m \cdot k^m$.

\subsection{Heuristic Learning Algorithm}\label{secHeuristicAlgorithm}

In this section, we propose a practical heuristic algorithm for learning a possibilistic logic theory from a set $\mathcal{S}$ of positive and negative examples of default rules. Our method combines greedy structure learning with greedy weight learning. We assume that every default or non-default $\alpha \snake \beta$ in $\mathcal{S}$ is such that $\neg \alpha$ and $\beta$ correspond to clauses.

The algorithm starts by initializing the ``working'' stratification $\mathcal{T}^*$ to be an empty list. Then it repeats the following revision procedure for a user-defined number of iterations $n$, or until a timeout is reached. First, it generates a set of candidate propositional clauses $C$ as follows:
\begin{itemize}
    \item It samples a set of defaults $\alpha \snake \beta$ from the examples that are misclassified by $\mathcal{T}^*$.
    \item For each default $\alpha \snake \beta$ which has been sampled, it samples a subclause $\neg\alpha'$ of $\neg\alpha$ and a subclause $\beta'$ of $\beta$. If $\alpha \snake \beta$ is a positive example then $\neg \alpha' \vee \beta'$ is added to $C$; if it is a negative example, then $\neg \alpha' \vee \beta''$ is added instead, where $\beta''$ is obtained from $\beta'$ by negating each of the literals.
\end{itemize}
The algorithm then tries to add each formula in $C$ to an existing level of $\mathcal{T}^*$ or to a newly inserted level. It picks the clause $c$ whose addition leads to the highest accuracy and adds it to $\mathcal{T}^*$. The other clauses from $C$ are discarded. In case of ties, the clause which leads to the  stratification with the fewest levels is selected, in accordance with the structural risk minimization principle and our derived VC dimension. If there are multiple such clauses, then it selects the shortest among them. Subsequently, the algorithm tries to greedily minimize the newly added clause $c$, by repeatedly removing literals as long as this does not lead to an increase in the training set error. Next, the algorithm tries to revise $\mathcal{T}^*$ by greedily removing clauses whose deletion does not increase the training set error. Finally, as the last step of each iteration, the weights of all clauses are optimized by greedily reinserting each clause in the theory.


\section{Experiments}
We evaluate our heuristic learning algorithm\footnote{\label{footnote-online}The data, code, and learned models are available from \url{https://github.com/supertweety/}.} in two different applications: learning domain theories from crowdsourced default rules and approximating MAP inference in propositional Markov logic networks. As we are not aware of any existing methods that can learn a consistent logical theory from a set of noisy defaults, there are no baseline methods to which our method can directly be compared. However, if we fix a target literal $l$, we can train standard classifiers to predict for each propositional context $\alpha$ whether the default $\alpha \snake l$ holds. This can only be done consistently with ``parallel'' rules, where the literals in the consequent do not appear in antecedents. We will thus compare our method to three traditional classifiers on two crowdsourced datasets of parallel rules: random forests \cite{breiman.rf}, C4.5 decision trees \cite{quinlan.c4.5}, and the rule learner RIPPER \cite{ripper}. Random forests achieve state-of-the-art accuracy\footnote{A recent large-scale empirical evaluation has shown that variants of the random forest algorithm tend to perform best on real-life datasets \cite{JMLR:v15:delgado14a}.} but its models are difficult to interpret. Decision trees are often less accurate but more interpretable than random forests. Finally, rule learners have the most interpretable models, but often at the expense of lower accuracy. In the second experiment, approximating MAP inference, we do not restrict ourselves to parallel rules. In this case, only our method can guarantee that the predicted defaults will be consistent. 

\subsection{Methodology}

Our learning algorithm is implemented in Java and uses the SAT4j library \cite{sat4j}. The implementation contains a number of optimizations which make it possible to handle datasets of thousands of default rules, including caching, parallelization, detection of relevant possibilistic subtheories for deciding entailment queries and unit propagation in the possibilistic logic theories. 

We use the Weka \cite{weka} implementations for the three baselines. 
When using our heuristic learning algorithm, we run it for a maximum time of 10 hours for the crowdsourcing experiments reported in Section \ref{sec:exp-crowd} and for one hour for the experiments reported in Section \ref{sec:exp-map}. For C4.5 and RIPPER, we use the default settings. For random forests, we used the default settings and set the number of trees to 100.

\subsection{Learning from Crowdsourced Examples}\label{sec:exp-crowd}

We used CrowdFlower, an online crowdsourcing platform, to collect expert rules about two domains. 
In the first experiment, we created 3706 scenarios for a team on offense in American football by varying the field position, down and distance, time left, and score difference. Then we presented six choices for a play call (punt, field goal, run, pass, kneel down, do not know/it depends) and asked the user to select the most appropriate one. All scenarios were presented to 5 annotators.
A manual inspection of a subset of the rules revealed that they are of reasonably high quality. In a second experiment, users were presented with 2388 scenarios based on Texas hold'em poker situations, where users were asked whether in a given situation they would typically fold, call or raise, with a fourth option again being ``do not know/it depends''. Each scenario was again presented to 5 annotators. Given the highly subjective nature of poker strategy, it was not possible to enforce the usual quality control mechanism on CrowdFlower in this case, and the quality of the collected rules was accordingly found to be more variable.

In both cases, the positive examples are the rules obtained via crowdsourcing, while negative examples are created by taking positive examples and randomly selecting a different consequent.
To create training and testing sets, we divided the data based on annotator ID so that all rules labeled by a given annotated appear only in the training set or only in the testing set, to prevent leakage of information. We added a set of hard rules to the possibilistic logic theories to enforce that only one choice should be selected for a game situation. The baseline methods were presented with the same information, in the sense that the problem was presented as a multi-class classification problem, i.e.\ given a game situation, the different algorithms were used to predict the most typical action (with one additional option being that none of the actions is typical). The results are summarized in Table \ref{tab:mrfs}.


In the poker experiment, our approach obtained slightly higher accuracy than random forest and RIPPER but performed slightly worse than C4.5. However, a manual inspection showed that a meaningful theory about poker strategy was learned.
For example, at the lowest level, the possibilistic logic theory contains the rule ``call'', which makes sense given the nature of the presented scenarios. At a higher level, it contains more specific rules such as ``if you have three of a kind then raise''. At the level above, it contains exceptions to these more specific rules such as ``If you have three of a kind, there are three hearts on the board and your opponent raised on the river then call''.

In the American football experiment, our approach obtained lower accuracy than the competing algorithms. The best accuracy was achieved by C4.5. Again, we also manually inspected the learned possibilistic logic theory and found that it captures some general intuitions and known strategy about the game. For example, the most general rule is "pass" which is the most common play type. Another example is that second most general level has several rules that say on fourth down and long you should punt. More specific levels that allow for cases when you should not punt, such as when you are in field goal range.


Despite not achieving the same accuracy as C4.5 in this experiment, it nonetheless seems that our method is useful for building up domain theories by crowdsourcing opinions. The learned domain theories are easy to interpret (e.g., the size of the poker theory, as a sum of rule lengths, is more than 10 times smaller than the number of nodes in the learned tree) and capture relevant strategies for both games. The models obtained by classifiers such as C4.5, on the other hand, are often difficult to interpret. Moreover, traditional classifiers such as C4.5 can only be applied to parallel rules, and will typically lead to inconsistent logical theories in more complex domains. In contrast, our method can cope with arbitrary default rules as input, making it much more broadly applicable for learning domain theories.

\begin{table}
\center
\begin{tabular}{ l | c c c c}
    & Poss. & Rand. F. & C4.5 & RIPPER \\ \hline
Poker & 40.5 & 38.6 & \bf 41.1 & 39.9 \\
Football & 68.3 & 72.4 & \bf 74.6 & 73.1 \\ \hline
NLTCS & \bf 78.1 & 69.6 & 70.2 & 67.7 \\
MSNBC & \bf 62.0 & 61.9 & \bf 62.0 & 48.8 \\
Plants & 73.1 & \bf 77.8 & 71.4 & 53.8 \\
DNA & 52.8 & \bf 56.6 & 54.9 & 51.1 \\ 
\end{tabular}\caption{Test set accuracies.}\label{tab:mrfs}
\end{table}

\subsection{Approximating MAP Inference}\label{sec:exp-map}
Markov logic networks can be seen as weighted logical theories. The weights assigned to formulas are intuitively seen as penalties; they are used to induce a probability distribution over the set of possible world. Here we are interested in maximum a posteriori (MAP) inference. Specifically, we consider the following entailment relation from \cite{Saint-cyr94penaltylogic}: 
$(\mathcal{M},\alpha) \vdash_{\textit{MAP}} \beta \quad\text{iff}\quad \forall \omega \in \max(\mathcal{M},\alpha): \omega \models \beta$
where $\mathcal{M}$ is an MLN, $\alpha$ and $\beta$ are propositional formulas and $\max(\mathcal{M},\alpha)$ is the set of most probable models of $\alpha$, w.r.t.\ the probability distribution induced by $\mathcal{M}$. Note that MAP inference only depends on an ordering of possible worlds. It was shown in \cite{mln2posl} that for every propositional MLN $\mathcal{M}$ there exists a possibilistic logic theory $\Theta$ such that $(\mathcal{M},\alpha) \vdash_{\textit{MAP}} \beta$ iff $(\Theta,\alpha) \possent \beta$. Such a translation can be useful in practice, as possibilistic logic theories tend to be much easier to interpret, given that the weights associated with different formulas in an MLN can interact in non-trivial ways. Unfortunately, in general $\Theta$ is exponentially larger than $\mathcal{M}$. Moreover, the translation from \cite{mln2posl} requires an exact MAP solver, whereas most such solvers are approximate.

Therefore, rather than trying to capture MAP inference exactly, here we propose to learn a possibilistic logic theory from a set of examples of valid and invalid MAP entailments $(\mathcal{M},\alpha) \vdash_{\textit{MAP}} \beta$. Since our learning algorithm can handle non-separable data, we can use approximate MAP solvers for generating these examples, which leads to further gains in scalability.
As is common, the evidence $\alpha$ consists of conjunctions of up to $k$ literals, and the conclusion $\beta$ consists of an individual literal. To create examples, we randomly generate a large number of evidence formulas $\alpha$, each time considering a large number of possible $\beta$s. If $\beta$ is MAP-entailed by $\alpha$, we add $\alpha \snake \beta$ to the set of positive examples; otherwise we add it to the set of negative examples. Notice that defaults in these experiments are not restricted to be just ``parallel'' rules.



We considered propositional MLNs learned from NLTCS, MSNBC, Plants and DNA data using the method from \cite{lowd}. These are standard datasets, and have 16, 17, 69 and 180 Boolean random variables, respectively. We used the existing train/tune/test division of the data. For each dataset, we generated 1000 training examples and 1000 testing examples of default rules as described above, and considered evidence formulas $\alpha$ of up 5 literals. We learn the possibilistic logic theory on the training examples, and report results on the held-out testing examples. To use the classical learners, we represent the antecedent using two Boolean attributes for each variable in the domain: the first indicates the variable's positive presence in the antecedent while the second indicates its negative presence. We represent the consequent in the same way. The label of an example is positive if it appears in the default theory and negative otherwise. While this allows us to predict whether a default $a \snake b$ should be true, the set of defaults predicted by the classical methods will in general not be consistent.

The last four rows of Table \ref{tab:mrfs} show the test set accuracy for each approach on each domain. Overall, the learned possibilistic logic theories have similar performance to the decision tree and random forest models, and outperform RIPPER. This is quite remarkable, as the possibilistic logic theories are much more interpretable (containing approximately 50\% fewer literals than the decision trees), 
which usually means that we have to accept a lower accuracy. Moreover, while the other methods can also be used for predicting MAP entailment, only our method results in a consistent logical theory, which could e.g.\ easily be combined with expert knowledge.


\section{Conclusions}
The aim of this paper was to study the problem of reasoning with default rules from a machine learning perspective. We have formally introduced the problem of learning from defaults and have analyzed its theoretical properties. Among others, we have shown that the complexity of the main decision problem is $\Sigma_2^P$ complete, and we have established asymptotically tight bounds on the VC-dimension. At the practical level, we have proposed practical heuristic learning algorithm, which can scale to datasets with thousands of rules. We have presented experimental results that show the application potential of the proposed learning algorithm, considering two different application settings: learning domain theories by crowdsourcing expert opinions and approximating propositional MLNs. We believe that the methods proposed in this paper will open the door to a wider range of applications of default reasoning, where we see defaults as a convenient interface between experts and learned domain models.

\section*{Acknowledgments}

This work has been supported by a grant from the Leverhulme Trust (RPG-2014-164). Jesse Davis is partially supported by the KU Leuven Research Fund  (C22/15/015), and FWO-Vlaanderen (G.0356.12, SBO-150033).

\ifdefined\ARXIVVERSION

\appendix

\section{Proofs of Lemmas}

\begin{proof}[Proof of Lemma \ref{lemma:orders}]
We prove this lemma by induction on $m$. The base case $m = 1$ is obvious. To show that the lemma holds for $m+1$ if it holds for $m$, let $\mathcal{C}_1', \mathcal{C}_2' \subseteq \mathcal{C}'$ and $\mathcal{C}_1, \mathcal{C}_2 \subseteq \mathcal{C}$ be restrictions of $\mathcal{C}'$ and $\mathcal{C}$ to inequalities only involving elements $\{x_1,\dots,x_{2^m} \}$ and $\{x_{2^m+1},\dots, x_{2^{m+1}} \}$, respectively. From the induction hypothesis we know that there is a permutation $\pi_1$ of $\{x_1,\dots,x_{2^m}\}$ and a permutation $\pi_2$ of $\{x_{2^m+1},\dots, x_{2^{m+1}} \}$ which satisfy the inequalities from $\mathcal{C}_1'$ and $\mathcal{C}_2'$, respectively, and no other inequalities from $\mathcal{C}_1\setminus \mathcal{C}_1'$ and $\mathcal{C}_2\setminus \mathcal{C}_2'$, respectively. Next we construct an auxiliary directed graph as follows. We create a directed path for both of the permutations $\pi_1$ and $\pi_2$ so that the $i$-th vertex in the path is labeled by the $i$-th element of the corresponding permutation. For every inequality $c =$ ``$\textit{kth}(\{x_1,\dots,x_{2^m}\}, i) < \textit{kth}(\{x_{2^m+1},\dots,x_{2^{m+1}}\}, i)$'', we add an edge $e_i$ between the $i$-th vertex of the first path and the $i$-th vertex of the second path. If $c \in \mathcal{C}'$ then we orient the edge $e_i$ in the direction from the second path to the first path, and in the other direction otherwise. It is easy to check that the resulting graph is acyclic. By topologically ordering the vertices of this graph, we obtain a permutation of $\mathcal{X}$ which satisfies all inequalities from $\mathcal{C}'$ and no inequalities from $\mathcal{C}\setminus \mathcal{C}'$.
\end{proof}

\begin{proof}[Proof of Lemma \ref{lemma:pos_order}]
($\Rightarrow$) If $(\mathcal{T}^*,  \textit{at-least}_{m_y-k+1}(\neg y_1,$ $\neg y_2,$ $\dots,\neg y_{m_y}) )\allowbreak\possent\allowbreak \textit{at-least}_{k}(z_1,\allowbreak z_2,\allowbreak\dots,\allowbreak z_{m_z})$ then  at least $k$ levels of the possibilistic logic theory $(\mathcal{T}^*,\allowbreak \{ \textit{at-least}_{m_y-k+1}(\neg y_1,\allowbreak\neg y_2,\dots,\neg y_{m_y}) \})$ of the form $\{ (z_i,\lambda_{z_i}) \}$, where $z_i \in \mathcal{Z}$, must be above the drowning level, and the number of non-drowned levels of the form $\{(y_i,\lambda_{y_i})\}$, where $y_i \in \mathcal{Y}$, must be equal to $k-1$, so that $m_y-k+1$ of $y_i$'s could be set to false. This also means that the $k$-th greatest element of $\mathcal{Z}$ must be greater than the $k$-th greatest element of $\mathcal{Y}$, which is what we needed to show. ($\Leftarrow$) If $\textit{kth}(\{y_1,\dots,y_{m_y}\}, k) < \textit{kth}(\{z_1,\dots,z_{m_z}\},k)$ then at least $k$ levels of $(\mathcal{T}^*, \{ \textit{at-least}_{m_y-k+1}(\neg y_1,\neg y_2,\dots,\neg y_{m_y}) \})$ of the form $\{ (z_i,\lambda_{z_i}) \}$, where $z_i \in \mathcal{Z}$, must be above the drowning level. Hence $(\mathcal{T}^*,\allowbreak \textit{at-least}_{m_y-k+1}(\neg y_1,\allowbreak\neg y_2,\allowbreak\dots,\allowbreak\neg y_{m_y}) ) \allowbreak\possent\allowbreak \textit{at-least}_{k}(z_1,z_2,\dots,z_{m_z})$ must be true.
\end{proof}

\section{Exact Parameter Learning for Separable Data}

For completeness, in this section we describe an algorithm which, given a theory $\mathcal{T}$ and a set of training examples $\mathcal{S}$, returns a stratification, called {\em separating stratification}, of $\mathcal{T}$ which covers all positive examples and no negative examples from $\mathcal{S}$, if one exists, and returns `NULL' otherwise. 

A naive algorithm for finding a separating stratification would simply enumerate all 
stratifications of $\mathcal{T}$ and check each time whether all positive and no negative examples are covered. It would stop and return the first such stratification found or return NULL at the end. This would need $O(|\mathcal{T}|^{|\mathcal{T}|} \cdot \log_2{|\mathcal{T}|} \cdot |\mathcal{S}^+ \cup \mathcal{S}^-|)$ queries to a SAT solver in the worst case, as there are $O(|\mathcal{T}|^{|\mathcal{T}|})$ possible stratifications to consider, and for each of these we need to verify $|\mathcal{S}^+ \cup \mathcal{S}^-|$ entailments of the form $(\Theta,\alpha)\possent \beta$, each of which requires $O(\log_2 |\mathcal{T}|)$ calls to a SAT solver. Algorithm \ref{alg:generic} outlines an exact approach that ``only'' needs $O(2^{|\mathcal{T}|} \cdot |\mathcal{S}^+ \cup \mathcal{S}^-|)$ calls to a SAT solver when combined with memoization-based dynamic programming. It searches through the possible stratifications by recursively stratifying their top levels.

\begin{algorithm}[tb]
\caption{\sc Stratify-Separable}
\label{alg:generic}
{\small
\begin{algorithmic}[1]
\Require{A propositional theory $\mathcal{T}$, a multiset of positive examples $\mathcal{S}^+$, a multiset of negative examples $\mathcal{S}^-$}
\Ensure{A stratification of $\mathcal{T}$ covering all positive and no negative examples. The stratification is represented as an ordered list of levels of the stratification where the lowest levels come first.}
\Statex
\State{{\bf global} $\textit{\textit{Closed}}$ $\leftarrow$ an empty hash set.}
\State{$\mathcal{S}^+_0 \leftarrow \{ \alpha \snake \beta \in \mathcal{S}^+ : \mathcal{T} \cup \{ \alpha \} \not\vdash \bot \mbox{ and } \mathcal{T} \cup \{ \alpha \} \vdash \beta \}$}
\State{{\bf return} stratify-impl($\mathcal{T}$, $\mathcal{S}^+\setminus \mathcal{S}^+_0$, $\mathcal{S}^-$)}
\Statex
\State{{\bf function} stratify-impl($\mathcal{T}$, $\mathcal{S}^+$, $\mathcal{S}^-$)}

\State{\quad{\bf if} $\textit{Closed}$ contains $\mathcal{T}$}
\State{\quad\quad} {\bf return} NULL
\State{\quad{\bf endif}}
\State{\quad{\bf if } $\mathcal{T} = \emptyset$ {\bf and} $\mathcal{S}^+ = \emptyset$ {\bf then}}
\State{\quad\quad{\bf return} $()$ /* i.e. an empty list */ }
\State{\quad{\bf endif}}
\State{\quad{\bf foreach} $\mathcal{T}' \subseteq \mathcal{T}$ such that (i) $\mathcal{T}' \cup \{\alpha \} \vdash \beta$
for all $\alpha \snake \beta \in \mathcal{S}^+$, and (ii) $\mathcal{T}' \cup \{ \alpha \} \vdash \bot$ or $\mathcal{T}' \cup \{ \alpha \} \not\vdash \beta$ for all $\alpha \snake \beta \in \mathcal{S}^-$}\label{line-for-loop}  
\State{\quad\quad $S^+_{\textit{cov}} \leftarrow \{ \alpha \snake \beta : \mathcal{T}' \cup \{ \alpha \} \not\vdash \bot \mbox{ and } \mathcal{T}' \cup \{ \alpha \} \vdash \beta \}$}
\State{\quad\quad $\textit{Stratification} \leftarrow $ stratify-impl($\mathcal{T}'$, $\mathcal{S}^+\setminus S^+_{\textit{cov}}$, $\mathcal{S}^-$)}
\State{\quad\quad{\bf if} $\textit{Stratification} \neq $ NULL {\bf then}}
\State{\quad\quad\quad {\bf return} $\textit{ConcatenateLists}((\mathcal{T} \setminus \mathcal{T}'), \textit{Stratification})$}
\State{\quad\quad{\bf endif}}
\State{\quad{\bf endforeach}}
\State{\quad {\bf store}($\textit{Closed}$, $\mathcal{T}$)}
\State{\quad{\bf return} NULL}
\State{{\bf end}}
\end{algorithmic}}
\end{algorithm}

\begin{table*}[t]
\begin{center}
  \begin{tabular}{l} 
\hline
$(\neg \textit{Fold} \vee \neg \textit{Raise},1)$, $(\neg \textit{Fold} \vee \neg \textit{Call},1)$, $(\neg \textit{Call} \vee \neg \textit{Raise},1),$ \\ \hline
$(\neg \textit{There\_are\_no\_straight\_or\_flush\_possibilities\_on\_the\_board}\vee \neg \textit{The\_opponent\_check-raised\_on\_the\_flop}\vee$ \\ \multicolumn{1}{r}{$\neg \textit{The\_opponent\_checked\_on\_the\_river}\vee \textit{call}, 0.75),$} \\
$(\neg \textit{The\_river\_card\_has\_just\_been\_dealt}\vee \neg \textit{You\_have\_three\_of\_a\_kind}\vee \neg \textit{There\_are\_three\_hearts\_on\_the\_board}\vee$ \\
\multicolumn{1}{r}{$\neg \textit{The\_opponent\_raised\_on\_the\_river}\vee \textit{Call}, 0.75),$} \\
$(\neg \textit{The\_river\_card\_has\_just\_been\_dealt}\vee \neg \textit{There\_are\_two\_pairs\_on\_the\_board}\vee$ \\
\multicolumn{1}{r}{$\neg \textit{The\_opponent\_checked\_the\_flop} \vee \textit{Call},0.75),$} \\
$(\neg\textit{The\_river\_card\_has\_just\_been\_dealt}\vee \neg \textit{The\_opponent\_checked\_the\_flop} \vee \neg \textit{The\_opponent\_check-raised\_on\_the\_turn}\vee$ \\
\multicolumn{1}{r}{$\neg \textit{The\_opponent\_raised\_on\_the\_river}, \textit{Call},0.75),$} \\
$(\neg\textit{The\_river\_card\_has\_just\_been\_dealt}\vee \neg\textit{You\_have\_three\_of\_a\_kind}\vee$ \\
\multicolumn{1}{r}{$\neg\textit{The\_opponent\_just\_called\_your\_bet\_on\_the\_turn}\vee \textit{Call},0.75)$,} \\ \hline
$(\neg \textit{You\_have\_a\_flush}\vee \textit{Raise}, 0.5)$, $(\neg \textit{You\_have\_a\_full\_house}\vee \textit{Raise}, 0.5)$, $(\neg \textit{You\_have\_three\_of\_a\_kind}\vee \textit{Raise}, 0.5)$,\\
$(\neg\textit{The\_opponent\_raised}\vee \neg \textit{The\_opponent\_has\_been\_playing\_very\_aggressive\_all\_evening}\vee \textit{Raise}, 0.5)$ \\
$(\neg\textit{There\_are\_no\_cards\_yet\_on\_the\_board}\vee \neg\textit{You\_have\_two\_3s}\vee \neg\textit{The\_opponent\_made\_a\_3-bet}\vee \textit{Fold},0.5)$, \\ \hline
$(\neg \textit{The\_flop\_cards\_have\_just\_been\_dealt} \vee \neg{You\_have\_a\_straight\_draw\_and\_a\_flush\_draw}, 0.25),$\\
$(\neg \textit{There\_are\_no\_cards\_yet\_on\_the\_board}\vee \neg \textit{You\_have\_2-3\_suited} \vee \textit{Raise}, 0.25)$, \\
$(\neg \textit{The\_opponent\_made\_a\_huge\_raise} \vee \textit{Raise},0.25),$ 
$(\textit{Call}, 0.25)$ \\ \hline
  \end{tabular}
  \end{center}
  \caption{Possibilistic logic theory learned in the crowd-sourced poker domain.}\label{tab:poker}
\end{table*}

\begin{theorem}\label{thm:correctness}
{\sc Stratify-Separable} finds a separating stratification if one exists, and returns NULL otherwise. When combined with memoization-based dynamic programming, it runs in time $O((4^{|\mathcal{T}|}+2^{|\mathcal{T}| + |\textit{vars}(\mathcal{T})|}) \cdot (|\mathcal{S}^+ \cup \mathcal{S}^-|))$ and uses $O(2^{|\mathcal{T}|} \cdot (|\mathcal{S}^+ \cup \mathcal{S}^-|))$ queries to an NP oracle, where $\textit{vars}(\mathcal{T})$ is the set of propositional variables in~$\mathcal{T}$.
\end{theorem}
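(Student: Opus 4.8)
The plan is to prove the statement in two parts: first the partial correctness of \textsc{Stratify-Separable} (it returns a separating stratification exactly when one exists, and \texttt{NULL} otherwise), and then the resource bounds under memoization. For correctness I would argue by structural induction on $|\mathcal{T}|$, with the whole argument resting on a single \emph{coverage-reduction lemma} that decouples the lowest stratum from the rest, so that peeling one level at a time is semantically faithful to the drowning-based inference $\possent$.

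The key lemma I would establish first is the following. Fix a candidate top part $\mathcal{T}'\subsetneq\mathcal{T}$, let $B=\mathcal{T}\setminus\mathcal{T}'$ be the peeled lowest stratum, and let $\tau$ be any stratification of $\mathcal{T}'$; write $\sigma=(B,\tau)$ for the combined stratification of $\mathcal{T}$. Then for every default $\alpha\snake\beta$: (a) if $\mathcal{T}\cup\{\alpha\}$ is consistent, then $\sigma$ covers $\alpha\snake\beta$ iff $\mathcal{T}\cup\{\alpha\}\vdash\beta$ (the maximal consistent prefix is the whole theory, so $B$ is never drowned and $\tau$ is irrelevant); and (b) if $\mathcal{T}\cup\{\alpha\}$ is inconsistent, then $B$ is drowned \emph{in its entirety}, since it is a single certainty level, and $\sigma$ covers $\alpha\snake\beta$ iff $\tau$ covers it as a stratification of $\mathcal{T}'$. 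This follows directly from unfolding $(\sigma,\alpha)\possent\beta$: the surviving prefix is the $\subseteq$-largest prefix of $\sigma$ consistent with $\alpha$, which either contains all of $B$ (case a) or none of it (case b).

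With the lemma in hand, soundness and completeness both go through by induction. For soundness I would show that whenever \texttt{stratify-impl}$(\mathcal{T},\mathcal{S}^+,\mathcal{S}^-)$ returns $\sigma=(B,\tau)$, the loop conditions, namely (i) $\mathcal{T}'\cup\{\alpha\}\vdash\beta$ for the surviving positives and (ii) $\mathcal{T}'\cup\{\alpha\}\vdash\bot$ or $\mathcal{T}'\cup\{\alpha\}\not\vdash\beta$ for the negatives, together with the inductive guarantee on $\tau$ and the filtering of block-covered positives ($\mathcal{S}^+_0$ and $S^+_{\textit{cov}}$), are exactly what cases (a)/(b) require to certify that $\sigma$ covers all of $\mathcal{S}^+$ and none of $\mathcal{S}^-$; the only positives dropped are those covered by a consistent super-prefix, which by case (a) stay covered. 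For completeness I would take any separating stratification $\sigma^\ast=(L_k,\dots,L_1)$, peel off its lowest level $L_k$, and verify that $\mathcal{T}'=L_1\cup\dots\cup L_{k-1}$ satisfies (i) and (ii): since $L_k$ is drowned precisely when $\mathcal{T}\cup\{\alpha\}$ is inconsistent, maximality of the consistent cut forces $\mathcal{T}'\cup\{\alpha\}$ either to be inconsistent or to coincide with that cut, which is exactly (i)/(ii), and the induction hypothesis then recovers a suitable $\tau$ on $\mathcal{T}'$.

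For termination and the bounds, each recursive call strictly shrinks the theory ($\mathcal{T}'\subsetneq\mathcal{T}$), so there are at most $2^{|\mathcal{T}|}$ distinct subproblem theories and \textit{Closed} visits each at most once. Each such subproblem runs a loop over its $\le 2^{|\mathcal{T}|}$ subsets $\mathcal{T}'$, giving $O(4^{|\mathcal{T}|})$ iterations, each costing $O(|\mathcal{S}^+\cup\mathcal{S}^-|)$ cached lookups; since the tests $\mathcal{T}'\cup\{\alpha\}\vdash\bot$ and $\mathcal{T}'\cup\{\alpha\}\vdash\beta$ depend only on the pair $(\mathcal{T}',\text{example})$, there are at most $2^{|\mathcal{T}|}\cdot|\mathcal{S}^+\cup\mathcal{S}^-|$ distinct oracle calls, and resolving each by brute force over the $2^{|\textit{vars}(\mathcal{T})|}$ assignments yields the stated $O((4^{|\mathcal{T}|}+2^{|\mathcal{T}|+|\textit{vars}(\mathcal{T})|})\cdot|\mathcal{S}^+\cup\mathcal{S}^-|)$ time. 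I expect the two genuinely delicate steps to be the coverage-reduction lemma (the interplay between the drowning level and the recursive peeling) and the soundness of memoizing on $\mathcal{T}$ alone despite the surviving positive set being path-dependent; the latter I would justify via case (a), arguing that any positive block-covered along a path is automatically covered by a consistent super-prefix and so imposes no constraint, making solvability a function of $\mathcal{T}$ and the fixed negatives only.
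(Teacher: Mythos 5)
Your proposal is correct and follows essentially the same route as the paper's (sketched) proof: your coverage-reduction lemma is precisely the paper's key observation that peeling the lowest stratum is faithful to the drowning semantics (the bottom level survives exactly when $\mathcal{T}\cup\{\alpha\}$ is consistent, in which case the stratification is irrelevant, and is drowned as a whole otherwise), your induction mirrors the paper's recursion on the top part, and your counting of $O(4^{|\mathcal{T}|})$ loop iterations plus $O(2^{|\mathcal{T}|}\cdot|\mathcal{S}^+\cup\mathcal{S}^-|)$ cached oracle calls, each resolvable in $O(2^{|\textit{vars}(\mathcal{T})|})$ by brute force, reproduces the paper's accounting verbatim. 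The only (minor) difference is the memoization step: the paper observes that the surviving positive set is \emph{uniquely determined} by the theory argument --- it equals $\{\alpha \snake \beta : \mathcal{T}'\cup\{\alpha\}\vdash\bot\}$, since by monotonicity of consistency any positive dropped at an ancestor has $\mathcal{T}'\cup\{\alpha\}$ consistent --- so keying on $\mathcal{T}$ alone is sound outright, whereas you treat the set as path-dependent and argue via your case (a) that dropped positives impose no constraint; your argument is in substance the same and in fact yields the paper's determinedness claim.
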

\begin{proof} (Sketch)
 Algorithm \ref{alg:generic}'s correctness follows from the simple observation that a separating stratification exists if and only if $\mathcal{T}$ can be split into two sets, a non-empty set $\mathcal{T}'$ and a possibly empty set $\mathcal{T}''$ such that:
 \begin{itemize}
 \item there is a separating stratification ${\mathcal{T}''}^*$ of the set $\mathcal{T}''$ which covers positive examples from the set $\{ \alpha \snake \beta \in \mathcal{S}^+ : \mathcal{T} \cup \{ \alpha \} \vdash \bot \}$ and no negative examples from $\mathcal{S}^-$,
 \item $\{ (\alpha, 1) : \alpha \in \mathcal{T} \}$ does not cover any example from $\mathcal{S}^-$.
 \end{itemize}
This means that we can search through possible stratifications by refining their top level, which is what the recursion in Algorithm \ref{alg:generic} achieves. We can discard those stratifications which cover some of the negative examples. We can also discard stratifications for which we know that no refinement of the top level will cover all of the positive examples. These two conditions are checked on line \ref{line-for-loop} of Algorithm \ref{alg:generic}.

The upper bound on the number of SAT queries can be achieved simply by caching the results of queries. Finally, the total runtime bound can be achieved by using dynamic programming and memoization (i.e.\ by caching the results of the procedure $\textit{stratify-impl}$), and using the fact that SAT problems with $n$ variables can be solved in time $O(2^{n})$. Notice that $\textit{stratify-impl}$ will be called with at most $2^\mathcal{T}$ different sets of input arguments. This is because the second argument is actually always uniquely determined by the first argument $\mathcal{T}'$, given the conditions on $\mathcal{T}'$ in the for statement on line \ref{line-for-loop}: it is the set of positive examples $\alpha \snake \beta$ such that $\mathcal{T}' \cup \{\alpha \}$ is inconsistent. So, $\textit{stratify-impl}$ will be run at most $2^{|\mathcal{T}|}$ times and the for statement on line \ref{line-for-loop} will always go through at most $2^{|\mathcal{T}|}$ different sets $\mathcal{T}'$, while the cost of each iteration
will only be proportional to the number of examples, if the SAT queries are assumed to be pre-computed and cached. This gives us a worst-case bound on the runtime of the algorithm of $O((4^{|\mathcal{T}|}+2^{|\mathcal{T}| + |\textit{vars}(\mathcal{T})|}) \cdot (|\mathcal{S}^+| + |\mathcal{S}^-|))$.
\end{proof}

\begin{table}[t]
\begin{center}
  \begin{tabular}{l} \hline
	$(\neg a_{12} \vee a_{14}, 0.9),$ $(\neg a_{16} \vee a_{8}, 0.9), $ \\ \hline
	$(a_{8} \vee a_{15} \vee \neg a_{13} \vee a_{14}, 0.8), $ 	$(\neg a_{3} \vee a_{8}, 0.8), $ \\ 
	$(\neg a_{16} \vee a_{12}, 0.8), $ $(\neg a_{2} \vee a_{1}, 0.8), $ $(\neg a_{6} \vee a_{14}, 0.8), $ \\
	$(\neg a_{15} \vee a_{1}, 0.8), $ \\ \hline
	$(\neg a_{14} \vee a_{10}, 0.7), $ $(\neg a_{9}, 0.7), $ $(\neg a_{11} \vee a_{13}, 0.7), $ \\ \hline
	$(\neg a_{1} \vee a_{10}, 0.6), $ $(\neg a_{11} \vee a_{5}, 0.6), $ \\ \hline
	$(\neg a_{7}, 0.5), $ $(\neg a_{8}, 0.5), $ $(\neg a_{14} , 0.5), $ \\ \hline
	$(\neg a_{1}, 0.4), $ \\ \hline
	$(\neg a_{4}, 0.3), $ \\ \hline
	$(\neg a_{10}, 0.2), $ \\ \hline
	$(\neg a_{5}, 0.1).$
  \end{tabular}
  \end{center}
  \caption{Possibilistic logic theory learned for approximation of MAP inference in the NLTCS domain.}\label{tab:nltcs}
\end{table}

The implementation available online contains also an optimized version of the exact algorithm.Note that due to its high complexity the algorithm described in this section does not scale to problems involving large numbers of default rules. For practical problems, it is therefore preferable to use the heuristic algorithm described in Section~\ref{secHeuristicAlgorithm}.

\section{Learned Models}

In this section we briefly describe two examples of theories, one learned in the crowd-sourced poker domain (see Section \ref{sec:exp-crowd}) and the other learned in MAP-inference approximation experiments for the NLTCS domain (see Section \ref{sec:exp-map}).

The learned theory for the poker domain is shown in Table~\ref{tab:poker}. Since default rules in the dataset from which this theory was learned were all ``parallel rules'', most of the formulas in the theory are clausal representations of implications of the form ``if situation $\alpha$ then action $\beta$''; an exception to this is one of the rules in the lowest level which has the form ``not situation $\alpha$'', where $\alpha$ is in this case ``The flop cards have just been dealt  and you have a straight draw and a flush draw'', and this rule basically serves to block the other rules in this level for evidence $\alpha' \supseteq \alpha$. The top level of the theory consists of hard integrity constraints.

Table \ref{tab:nltcs} shows a small theory which was learned in the NLTCS domain after 20 iterations of the algorithm (the complete learned theory available online is larger). Since in this domain the default rules were not restricted to be of the ``parallel'' form, also the structure of the rules in the theory is more complex.

\fi

\bibliographystyle{named}
{\small
\bibliography{bibliography}
}

\end{document}